\documentclass[conference,10pt]{IEEEtran}
\hyphenation{op-tical net-works semi-conduc-tor}
\usepackage{amsmath,amsfonts, amssymb}
\usepackage{amsmath}
\usepackage{mathrsfs}

\newtheorem{theorem}{Theorem}[section]

\usepackage{bm}
\newenvironment{proof}{\textit{Proof.}}{\hfill$\square$}
\usepackage{chngcntr}
\usepackage{apptools}
\AtAppendix{\counterwithin{lemma}{section}}

\DeclareMathOperator*{\argmin}{argmin}
\usepackage{enumitem}

\newcommand{\at}[2][]{#1|_{#2}}
\usepackage{verbatim}
\usepackage{lipsum}
\usepackage{float}
\usepackage{multicol}
\usepackage{graphicx}

\begin{document}
\title{A Unified View Between Tensor Hypergraph Neural Networks And Signal Denoising}

\vspace{-0.2cm}
\author{{\normalsize Fuli~Wang, Karelia~Pena-Pena, Wei~Qian, and~Gonzalo~R.~Arce}\\
{\normalsize University of Delaware, Newark, DE, USA}\\
{\normalsize Email: \{fuliwang, kareliap, weiqian, arce\}@udel.edu}}

\maketitle

\begin{abstract}
Hypergraph Neural networks (HyperGNNs) and hypergraph signal denoising (HyperGSD) are two fundamental topics in higher-order network modeling. Understanding the connection between these two domains is particularly useful for designing novel HyperGNNs from a HyperGSD perspective, and vice versa. In particular, the tensor-hypergraph convolutional network (T-HGCN) has emerged as a powerful architecture for preserving higher-order interactions on hypergraphs, and this work shows an equivalence relation between a HyperGSD problem and the T-HGCN. Inspired by this intriguing result, we further design a tensor-hypergraph iterative network (T-HGIN) based on the HyperGSD problem, which takes advantage of a multi-step updating scheme in every single layer. Numerical experiments are conducted to show the promising applications of the proposed T-HGIN approach.
\end{abstract}
\begin{IEEEkeywords}
Hypergraph Neural Network, Hypergraph Signal Denoising, Hypergraph Tensor.
\end{IEEEkeywords}

\IEEEpeerreviewmaketitle

\section{Introduction}
\vspace{-0.3cm}
Hypergraphs are ubiquitous in real-world applications for representing interacting entities. Potential examples include biochemical reactions that often involve more than two interactive proteins~\cite{biomedical}, recommendation systems that contain more than two items in a shopping activity~\cite{nextitem}, and traffic flows that can be determined by more than two locations~\cite{zheng2020gman}. In a hypergraph, entities are described as vertices/nodes, and multiple connected nodes form a hyperedge as shown in Fig.~\ref{fig: intro} (b, c) of a hypergraph example.

A hypergraph $\mathcal{G}$ is defined as a pair of two sets $\mathcal{G} = (\mathcal{V}, \mathcal{E})$, where $\mathcal{V} = \{v_1, v_2, ..., v_N\}$ denotes the set of $N$ nodes and $\mathcal{E} = \{e_1, e_2, ..., e_K\}$ is the set of $K$ hyperedges whose elements $e_k$ ($k = 1, 2,...,K$) are nonempty subsets of $\mathcal{V}$. The maximum cardinality of edges, or $m.c.e(\mathcal{G})$, is denoted by $M$, which defines the order of a hypergraph. Apart from the hypergraph structure, there are also features $\mathbf{x}_v \in \mathbb{R}^D$ associated with each node $v\in\mathcal V$, which are used as row vectors to construct the feature matrix $\mathbf{X}\in \mathbb{R}^{N\times D}$ of a hypergraph. From a hypergraph signal processing perspective, since the feature matrix $\mathbf{X}$ can be viewed as a $D$-dimensional signal over each node, we use the words ``feature" and ``signal" interchangeably throughout the paper.

\begin{figure}[htbp]
\centerline{\includegraphics[width=0.45\textwidth]{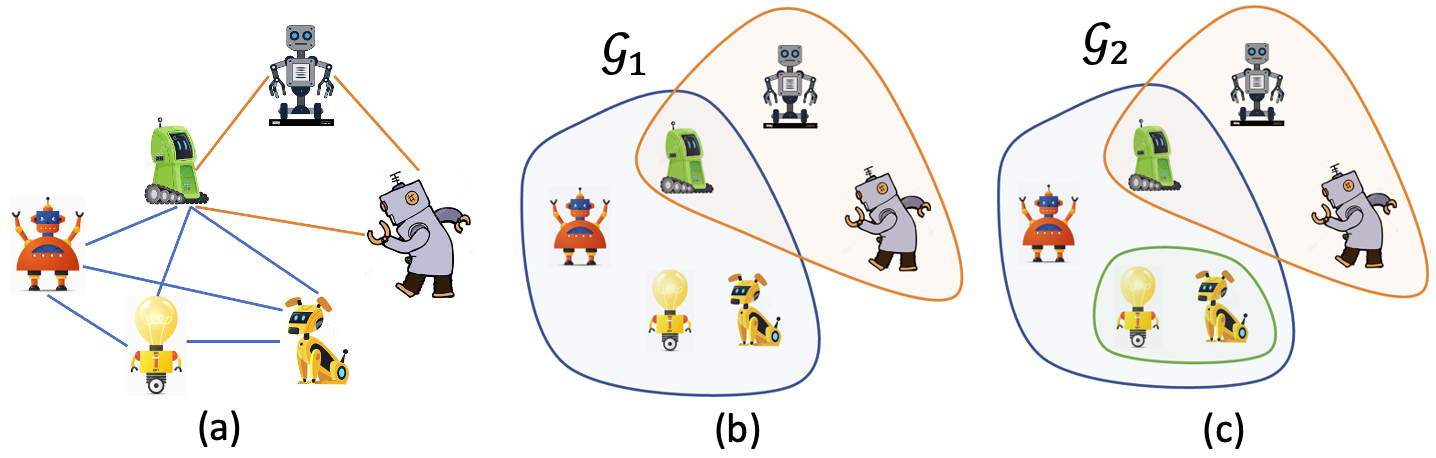}}
\vspace{-0.2cm}
\caption{Robot collaboration network represented by (a) a simple graph and (b) a hypergraph $\mathcal{G}_1$ and (c) another hypergraph $\mathcal{G}_2$. In (a), each cooperation relationship is denoted by a line connecting exactly two entities; whereas in (b) and (c), each hyperedge denoted by a colored ellipse represents multi-robot cooperation.}
\label{fig: intro}
\vspace{-0.5cm}
\end{figure}
Given the hypergraph structure $\mathcal{G}$ and the associated feature matrix $\mathbf{X}$, hypergraph neural networks (HyperGNNs) are built through two operations: 1) signal transformation and 2) signal shifting to leverage higher-order information. Specifically, if a HyperGNN is defined in a matrix setting, these two steps can be written as follows:
\begin{align}\label{eq:T-MPHN}
 \begin{cases}
        &\text{Signal transformation: } \mathbf{X}' = \phi_{trans}(\mathbf{X}; \mathcal{W}); \\
        & \text{Signal shifting: } \mathbf{Y} = \phi_{shift}(\mathbf{X}', \mathcal{G});
 \end{cases}
\end{align}
where $\mathbf{X}'$ is the transformed signal in a desired hidden dimension $D'$ and $\mathbf{Y}$ represents the linear combination of signals at the neighbors of each node according to the hypergraph structure $\mathcal{G}$. While here the variables are denoted by matrices, in fact, a tensor paradigm provides significant advantages~\cite{hypergraph_review} as will be introduced later, and thus will be at the core of this paper context. The signal transformation function $\phi_{trans}$, is parameterized by a learnable weight $\mathcal{W}$ and is generally constructed by multi-layer perceptrons (MLPs). As a result, the variation of HyperGNNs mainly lies in the signal-shifting step. To make use of the hypergraph structure in the signal-shifting step, an appropriate hypergraph algebraic descriptor is required. Prior efforts on HyperGNNs primarily focus on matrix representations of hypergraphs with possible information loss~\cite{hypergraph_review,wan2021principled}. Consider one of the most common hypergraph matrix representations, the adjacency matrix of the clique-expanded hypergraph used in~\cite{HGNN, attention}, which constructs pair-wise connections between any two nodes that are within the same hyperedge, thus only providing a non-injective mapping. As shown in Fig~\ref{fig: intro}, hypergraphs (b) $\mathcal{G}_1$ and (c) $\mathcal{G}_2$  have the same pairwise connections as the simple graph of Fig.~\ref{fig: intro} (a). 

Recently, a tensor-based HyperGNN framework T-HyperGNN~\cite{t-hypergnn} has been proposed to address potential information loss in matrix-based HyperGNNs. Specifically, the T-HyperGNN formulates tensor-hypergraph convolutional network (T-HGCN) via tensor-tensor multiplications (t-products)~\cite{order-p_tensor}, which fully exploits higher-order features carried by a hypergraph. Interestingly, we find that the hypergraph signal shifting in T-HGCN is equivalent to a one-step gradient descent of solving a hypergraph signal denoising (HyperGSD) problem (to be shown in Sec.~\ref{sec:equivalency}). Nevertheless, updating the gradient in one step per HyperGNN layer might be sub-optimal: For the two steps of HyperGNNs, only the signal shifting step corresponds to the gradient descent update. If we simply stack many layers of T-HGCN to perform  multi-step gradient descent as shown in Fig.~\ref{fig:modelcompare}(a), the number of learnable parameters will unnecessarily increase. More importantly, numerous sequential transformations of the hypergraph signals could cause indistinguishable features across all nodes, leading to the well-known over-smoothing problem~\cite{pagerank}. To overcome these issues, we propose an iterative $K$-step gradient descent procedure to solve the underlying HyperGSD problem, and further cast this procedure to formulate the novel {\bf Tensor-hypergraph iterative network (T-HGIN)}, which combines the $K$-step updating process (signal shifting) in just a single layer as shown in Fig.~\ref{fig:modelcompare}(b).
Additionally, T-HGIN leverages the initial input (with weight $\alpha$) and the current output (with weight $1-\alpha$) at each shifting step, performing a skip-connection operation that avoids over-smoothing.
\begin{figure}[htbp]
\vspace{-0.3cm}
\centerline{\includegraphics[width=0.42\textwidth]{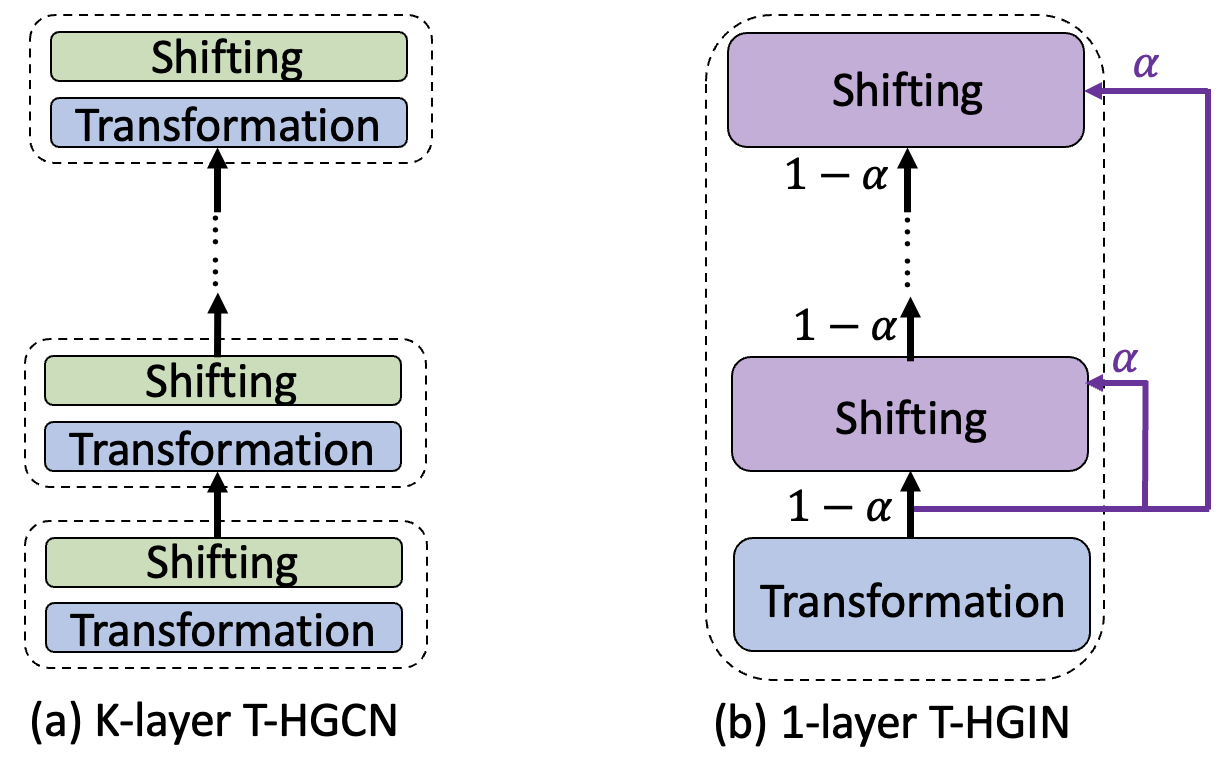}}
\caption{To perform $K$-step gradient descent for the underlying hypergraph signal denoising problem, we need (a) K-layer T-HGCN or alternatively (b) 1-layer T-HGIN.}\label{fig:modelcompare}
\vspace{-0.6cm}
\end{figure}

\section{Preliminaries}
\subsection{Hypergraph tensor representations and signal shifting}
While a hypergraph can be represented in either a matrix or a tensor form, in this work, we use tensorial descriptors to represent hypergraphs as they preserve intrinsic higher-order characteristics of hypergraphs~\cite{pena2022t}. Given a hypergraph $\mathcal{G} = (\mathcal{V}, \mathcal{E})$ containing $N$ nodes with order $M$ (that is, $m.c.e(\mathcal{G}) = M$), we define its {\bf normalized adjacency tensor} as an $M$-order $N$-dimensional tensor $\mathcal{A} \in \mathbb{R}^{N^M}$. Specifically, for any hyperedge $e_k = \{v_{k_1}, v_{k_2},...,v_{k_c} \} \in \mathcal{E}$ with $c = |e_k| \leq M$, the tensor's corresponding entries are given by
\begin{equation}\label{eq:adjacency}
     a_{p_1p_2...p_M} =\frac{1}{d(v_{p_1})} \frac{c}{\alpha},
\end{equation}
with
\begin{equation}
  \alpha = \sum_{r_1, r_2, ..., r_c \geq 1,\, \sum_{i=1}^c r_i = M} \binom{M}{r_1, r_2,\cdots, r_c},
  \label{eq:adjacency_alpha}
\end{equation}
and $d(v_{p_1})$ being the degree of node $v_{p_1}$ (or the total number of hyperedges containing $v_{p_1}$). The indices $p_1, p_2, ..., p_M$ for adjacency entries are chosen from all possible ways of $\{k_1, k_2, ..., k_c\}$'s permutations with at least one appearance for each element of the hyperedge set, and $\alpha$ is the sum of multinomial coefficients with the additional constraint $r_1, r_2,...,r_c \neq 0$. In addition, other entries not associated with any hyperedge are all zeros. Note that for any node $v_{p_1}\in \mathcal{V}$, we have $\sum_{p_2,...,p_M=1}^N a_{p_1p_2...p_M} =1$. 

The {\bf hypergraph signal tensor}, on the other hand, is designed as the $(M-1)$-time outer product of features along each feature dimension. Given the feature (or signal) matrix $\mathbf{X}\in \mathbb{R}^{N \times D}$ as the input, with $D$ being the dimension of features for each node, the $d$-th dimensional hypergraph signal ($d=1,\cdots,D$) is given by
\begin{equation}
    [\mathcal{X}]_d= \underbrace{[\mathbf{x}]_d \circ [\mathbf{x}]_d \circ \cdots \circ [\mathbf{x}]_d}_{\text{(M-1) times}} \in \mathbb{R}^{N\times 1 \times N^{(M-2)}},
    \label{eq:cni}
\end{equation}
where $\circ$ denotes the outer (elementary tensor) product, and $[\mathbf{x}]_d \in \mathbb{R}^N$ represents the $d$-th dimensional feature vector of all $N$ nodes. For example, given $M=3$, $ [\mathcal{X}]_d=[\mathbf{x}]_d [\mathbf{x}]_d^T \in \mathbb{R}^{N\times 1 \times N}$, where we unsqueeze the outer-product tensor to generate the additional second mode for the dimension index of different features. Then by computing $[\mathcal{X}]_d$ for all $D$ features and stacking them together along the second-order dimension, we obtain an $M^{\mathrm{th}}$-order interaction tensor $\mathcal{X}\in \mathbb{R}^{N\times D\times N^{(M-2)}}$. The resulting interaction tensor can be viewed as a collection of $D$ tensors, each depicting node interactions at one feature dimension. 
\par
Analogous to the simple graph signal shifting, \textbf{hypergraph signal shifting} is defined as the product of a hypergraph representation tensor $\mathcal{A}$ and a hypergraph signal tensor $\mathcal{X}$, offering the notion of information flow over a hypergraph. The tensor-tensor multiplications (known as t-products), in particular, preserve the intrinsic higher-order properties and are utilized to operate hypergraph signal shifting~\cite{pena2022t}. Take $M=3$ as a convenient example of the t-product. To provide an appropriate alignment in the t-product signal shifting (to be introduced in Eq.~\eqref{eq:tproduct}), we first symmetrize the adjacency tensor $\mathcal{A}\in \mathbb{R}^{N\times N\times N}$ to be $\mathcal{A}s \in \mathbb{R}^{N\times N\times (2N+1)}$ by adding a zero matrix $\mathbf{0}{N\times N}$ as the first frontal slice, reflecting the frontal slice of the underlying tensor, and then dividing by 2: $\mathcal{A}_s =\frac{1}{2}$ $\texttt{fold}([\mathbf{0}, \mathbf{A}^{(1)}, \mathbf{A}^{(2)}, ..., \mathbf{A}^{(N)}, \mathbf{A}^{(N)}, ..., \mathbf{A}^{(2)} ,\mathbf{A}^{(1)}])$, where the $k$-th frontal slice is $\mathbf{A}^{(k)} = \mathcal{A}(:,:,k)\in \mathbb{R}^{N\times N \times 1}$. After applying the same operation to the hypergraph tensor $\mathcal{X}$ and obtain $\mathcal{X}_s$, the hypergraph signal shifting is then defined through the t-product $*$ as
\begin{align}
&\quad\, \mathcal{A}_s*\mathcal{X}_s \\
&=\, \mathtt{fold}(\mathtt{bcirc}(\mathcal{A}_s)\cdot \mathtt{unfold}(\mathcal{X}_s))\\
    &=\, \mathtt{fold} \left(\begin{bmatrix}
                      \mathbf{0} & \mathbf{A}^{(1)} & \mathbf{A}^{(2)}  & \cdots & \mathbf{A}^{(2)} & \mathbf{A}^{(1)} \\
                     \mathbf{A}^{(1)} & \mathbf{0} & \mathbf{A}^{(1)}  & \cdots & \mathbf{A}^{(3)} & \mathbf{A}^{(2)}\\
                     \mathbf{A}^{(2)} & \mathbf{A}^{(1)} & \mathbf{0} & \cdots &\mathbf{A}^{(4)} & \mathbf{A}^{(3)}\\
                     \vdots & \vdots & \vdots & \ddots & \vdots & \vdots \\
                      \mathbf{A}^{(2)} &  \mathbf{A}^{(3)} &  \mathbf{A}^{(4)} & \cdots & \mathbf{0} & \mathbf{A}^{(1)}\\
                     \mathbf{A}^{(1)}  & \mathbf{A}^{(2)}& \mathbf{A}^{(3)} & \cdots &  \mathbf{A}^{(1)}  & \mathbf{0} 
                    \end{bmatrix}
                    \begin{bmatrix}
                    \mathbf{0}\\
                    \mathbf{X}^{(1)}\\
                    \mathbf{X}^{(2)}\\
                    \vdots\\
                    \mathbf{X}^{(2)}\\
                    \mathbf{X}^{(1)}\\
                    \end{bmatrix}\right),
	\label{eq:tproduct}
	\end{align}
where $\mathtt{bcirc}(\mathcal{A}_s)$ converts the set of $N_s$ frontal slice matrices (in $\mathbb R^{N\times N}$) of the tensor $\mathcal{A}_s$ into a block circulant matrix. The $\mathtt{unfold}(\mathcal{X}_s)$ stacks vertically the set of $N_s$ frontal slice matrices (in $\mathbb R^{N\times D}$) of $\mathcal{X}_s$ into a $N_sN \times D$ matrix. The $\mathtt{fold}()$ is the reverse of the $\mathtt{unfold}()$ process so that $\mathtt{fold}(\mathtt{unfold}(\mathcal{A}_s))=\mathcal{A}_s$. The t-product of higher order tensors is more involved with recursive computation with $3^{\mathrm{rd}}$ order base cases. To maintain presentation brevity here, a reader may refer to literature~\cite{order-p_tensor} for full technical details of the t-product $*$.

\subsection{Tensor-Hypergraph Convolutional Neural Network}
With the defined hypergraph signal shifting operation, a single T-HGCN~\cite{t-hypergnn} layer is given by $\mathcal{Y}_s = \mathcal{A}_s*\mathcal{X}_s*\mathcal{W}_s$, where $\mathcal{W}_s \in \mathbb{R}^{D \times D' \times N_s^{(M-2)}}$ is a learnable weight tensor with $DD'$ weights parameterized in the first frontal slice and all the remaining frontal slices being zeros. Since the t-product is commutable~\cite{order-p_tensor}, we rewrite the T-HGCN into the following two steps:
\begin{align}\label{eq:T-HGCN}
 \begin{cases}
        &\text{Signal transformation: } \mathcal{X}'_s = \text{MLP}(\mathcal{X}_s); \\
        & \text{Signal shifting: } \mathcal{Y}_s = \mathcal{A}_s * \mathcal{X}'_s,
 \end{cases}
\end{align}
where $\mathcal{X}_s \in \mathbb{R}^{N\times D\times N_s^{(M-2)}}$ and $\mathcal{Y}_s \in \mathbb{R}^{N\times D'\times N_s^{(M-2)}}$ are the input and output of a T-HGCN layer. To perform downstream tasks, non-linear activation functions can be applied to $\mathcal{Y}_s$ accordingly.

\section{Equivalence Between T-HGCN and Tensor Hypergraph Signal Denoising}\label{sec:equivalency}
Recall that the signal-shifting function $\phi_{shift}$ aggregates neighboring signals to infer the target signal of each node. The intuition behind the architecture of HyperGNNs (especially the signal shifting) is that connected nodes tend to share similar properties, that is, signals over a hypergraph are smooth. Motivated by this intuition and precious work~\cite{ma2021unified} on simple graphs, we introduce the tensor Hypergraph signal denoising (HyperGSD) problem with the smoothness regularization term and prove its equivalency to T-HGCN in this section.
\subsection{Tensor Hypergraph Signal Denoising}
\par
\textbf{Problem (Hypergraph Signal Denoising).} Suppose $\mathcal{X}_s \in \mathbb{R}^{N\times D \times N_s^{(M-2)}}$ is the hypergraph signal of an observed noisy hypergraph signal on an $M^{\text{th}}$ order hypergraph $\mathcal{G}$. Without loss of generality, we assume $D=1$ (if $D>1$, we can simply take summation over all feature dimensions and obtain the same result).  Motivated by a smoothness assumption of hypergraph signals, we formulate the HyperGSD problem with the Laplacian-based total variation regularization term as follows: 
\begin{equation}
\label{eq:denoising}
 \argmin_{\mathcal{Y}_s} \mathcal{J} = (\mathcal{Y}_s-\mathcal{X}_s)^T*(\mathcal{Y}_s-\mathcal{X}_s) + b  \mathcal{Y}_s^T * \mathcal{L}_s * \mathcal{Y}_s, 
\end{equation}
where $\mathcal{Y}_s \in \mathbb{R}^{N\times 1 \times N_s^{(M-2)}}$ is the desired hypergraph signal that we aim to recover, $b>0$ is a scalar for the regularization term, and the last $M-2$ orders of all the tensors are flattened as frontal slice indices to simplify the t-product. Here, $\mathcal{L}_s=\mathcal I_s-\mathcal{A}_s$ is the normalized symmetric Laplacian tensor, and $\mathcal I_s$ is an identity tensor (with the first frontal slice being identity matrix and the other entries being zero). The tensor transpose of $\mathcal{Y}_s\in \mathbb{R}^{N\times 1 \times N_s^{(M-2)}}$, under the t-algebra, is defined as $\mathcal{Y}^T_s\in \mathbb{R}^{1\times N \times N_s^{(M-2)}}$, which is obtained by recursively transposing each sub-order tensor and then reversing the order of these sub-order tensors~\cite{order-p_tensor}. The first term encourages the recovered signal $\mathcal{Y}_s$ to be close to the observed signal $\mathcal{X}_s$, while the second term encodes the regularization as neighboring hypergraph signals tend to be similar. Notice that the cost function $\mathcal{J}(\mathcal{Y}_s)$ is not a scalar, but a tensor in $1\times 1\times N_s^{(M-2)}$.
\subsection{T-HGCN as Hypergraph Signal Denoising}
Next, we show the key insight that the hypergraph signal shifting operation in the T-HGCN is directly connected to the HyperGSD problem, which is given in the following theorem.
\par
\begin{theorem}\label{theorem:signal_denoising}
The hypergraph signal shifting $\mathcal{Y}_s = \mathcal{A}_s * \mathcal{X}_s$ is equivalent to a one-step gradient descent of solving the leading function of the HyperGSD problem Eq.~\eqref{eq:denoising} with $c = \frac{1}{2b}$, where $c$ is the learning rate of the gradient descent step.
\end{theorem}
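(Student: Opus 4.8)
The plan is to compute the gradient of the tensor-valued cost $\mathcal{J}$ with respect to $\mathcal{Y}_s$ inside the t-algebra, then perform a single gradient-descent step initialized at the observed signal $\mathcal{Y}_s^{(0)} = \mathcal{X}_s$, and verify that the prescribed learning rate $c = \tfrac{1}{2b}$ collapses that update exactly to $\mathcal{A}_s * \mathcal{X}_s$. First I would treat the t-product as a formal stand-in for ordinary matrix multiplication: via the block-circulant embedding used in Eq.~\eqref{eq:tproduct}, the t-product is associative and distributive and admits a well-defined tensor transpose, so both terms of $\mathcal{J}$ are genuine t-quadratic forms. Differentiating then mirrors the familiar vector case: the data-fidelity term $(\mathcal{Y}_s - \mathcal{X}_s)^T * (\mathcal{Y}_s - \mathcal{X}_s)$ yields $2(\mathcal{Y}_s - \mathcal{X}_s)$, and the regularizer $b\,\mathcal{Y}_s^T * \mathcal{L}_s * \mathcal{Y}_s$ yields $2b\,\mathcal{L}_s * \mathcal{Y}_s$ because $\mathcal{L}_s = \mathcal{I}_s - \mathcal{A}_s$ is t-symmetric, inheriting symmetry from the reflectively symmetrized adjacency $\mathcal{A}_s$. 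Thus the gradient is $2(\mathcal{Y}_s - \mathcal{X}_s) + 2b\,\mathcal{L}_s * \mathcal{Y}_s$.

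Next I would initialize the descent at $\mathcal{Y}_s^{(0)} = \mathcal{X}_s$, the natural starting iterate since the noisy observation is the best available guess of the clean signal. Evaluated at $\mathcal{Y}_s = \mathcal{X}_s$ the data-fidelity contribution vanishes, leaving the gradient equal to $2b\,\mathcal{L}_s * \mathcal{X}_s$. The single update is therefore $\mathcal{Y}_s^{(1)} = \mathcal{X}_s - c\,(2b\,\mathcal{L}_s * \mathcal{X}_s)$. Substituting $c = \tfrac{1}{2b}$ sets the coefficient $2bc = 1$, so $\mathcal{Y}_s^{(1)} = \mathcal{X}_s - \mathcal{L}_s * \mathcal{X}_s = \mathcal{X}_s - (\mathcal{I}_s - \mathcal{A}_s) * \mathcal{X}_s = \mathcal{A}_s * \mathcal{X}_s$, where the final step uses $\mathcal{I}_s * \mathcal{X}_s = \mathcal{X}_s$. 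This recovers precisely the hypergraph signal shifting, establishing the claimed equivalence.

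The hard part will be making the tensor-calculus step rigorous rather than merely formal, since the objective lives in $\mathbb{R}^{1\times 1\times N_s^{(M-2)}}$ and is not a scalar. My intended remedy is to unfold each t-quadratic form through $\mathtt{bcirc}(\cdot)$ into an ordinary Euclidean quadratic form, apply the standard gradient identities slice-by-slice, and then $\mathtt{fold}$ the result back; the ``leading function'' terminology in the statement signals that one minimizes the principal (first-frontal-slice) component of the tensor cost, which is exactly the scalar to which these identities apply. The one genuine ingredient to verify is the t-symmetry $\mathcal{L}_s^T = \mathcal{L}_s$, which follows from the explicit reflective construction of $\mathcal{A}_s$ as the folded sequence $\tfrac{1}{2}[\mathbf{0}, \mathbf{A}^{(1)}, \ldots, \mathbf{A}^{(N)}, \mathbf{A}^{(N)}, \ldots, \mathbf{A}^{(1)}]$; once this symmetry is in hand, the factor-of-two bookkeeping in the quadratic gradient and the cancellation $\mathcal{I}_s * \mathcal{X}_s = \mathcal{X}_s$ close the argument.
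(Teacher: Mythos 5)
Your proposal is correct and takes essentially the same route as the paper's proof: both compute the t-quadratic gradient $2(\mathcal{Y}_s-\mathcal{X}_s)+2b\,\mathcal{L}_s*\mathcal{Y}_s$ (the paper carries an explicit $\mathtt{bcirc}(\cdot)$ and then drops it via the leading function $\mathcal{J}_1$, which is exactly your first-frontal-slice/first-block-column reduction), evaluate it at $\mathcal{Y}_s=\mathcal{X}_s$ so the fidelity term vanishes, and use $2bc=1$ together with $\mathcal{L}_s=\mathcal{I}_s-\mathcal{A}_s$ to collapse the update to $\mathcal{A}_s*\mathcal{X}_s$. Your explicit verification of the t-symmetry of $\mathcal{L}_s$ is a bookkeeping detail the paper leaves implicit, not a different argument.
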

\begin{proof} First take the derivative of the cost function $\mathcal{J}(\mathcal{Y}_s)$ w.r.t $\mathcal{Y}_s$:
\begin{equation}
   \frac{\partial \mathcal{J}}{\partial \mathcal{Y}_s}= 2 \cdot \mathtt{bcirc}
   (\mathcal{Y}_s-\mathcal{X}_s) + 2b\cdot \mathtt{bcirc}(\mathcal{L}_s * \mathcal{Y}_s).
    \label{eq:grad}
\end{equation}

Recall from Eq.~\eqref{eq:tproduct} that the $\mathtt{bcirc}(\cdot)$ operation has the first column being the unfolded $2N+1$ frontal slices, and the other columns being the cyclic shifting of the first column. When updating $\mathcal{Y}_s$ using one-step gradient descent, the first column of a block circulant tensor is sufficient, as it contains all information of updating $\mathcal{Y}_s$, and the remaining columns differ from the first column in order only. Using the leading function $\mathcal{J}_1$ for Eq.~\eqref{eq:grad}, which gives the first block column of the circulant tensor $\frac{\partial \mathcal{J}}{\partial \mathcal{Y}_s}$, we can simply drop the $\text{bcirc}(\cdot)$ operation so that the one-step gradient descent to update $\mathcal{Y}_s$ from $\mathcal{X}_s$ is
\begin{align}
    \mathcal{Y}_s &\leftarrow \mathcal{X}_s - c \frac{\partial \mathcal{J}_1}{\partial \mathcal{Y}_s}\at[\Big]{\mathcal{Y} =  \mathcal{X}_s} \\
    &=  \mathcal{X}_s - 2bc (\mathcal{L}_s * \mathcal{X}_s)\\
    &= (1-2bc) \mathcal{X}_s + 2bc \mathcal{A}_s * \mathcal{X}_s.
\end{align}
Given learning rate $c = \frac{1}{2b}$, we obtain $\mathcal{Y}_s \leftarrow \mathcal{A}_s * \mathcal{X}_s$, which is the same form as the shifting operation in Eq.~\eqref{eq:T-HGCN}.
\end{proof}

This theorem implies that a single layer of T-HGCN~\cite{t-hypergnn} is essentially equivalent to solving the HyperGSD problem by one-step gradient descent. Correspondingly, performing a $K$-step gradient descent would require $K$ layers of T-HGCN, which could much increase the number of learnable parameters. As a result, a question naturally arises: Can we perform multi-step gradient descent toward the HyperGSD problem with just a single layer of HyperGNNs? We provide an affirmative answer by proposing the T-HGIN approach in the next section.

\section{Tensor-Hypergraph Iterative Network}
With the goal of merging multi-step gradient descent into a single HyperGNN, we first propose the $K$-step iterative gradient descent for the HyperGSD problem in Eq.~\eqref{eq:denoising}. Then we adopt the iteration process to design the Tensor-Hypergraph Iterative Network (T-HGIN).

{\bf Iterative Gradient Descent for Signal Denoising.}
Given the gradient of the HyperGSD problem in Eq.~\eqref{eq:grad}, we now update the gradient iteratively to obtain the sequence of hypergraph signals $(\mathcal{Y}_s^{(0)}, \mathcal{Y}_s^{(1)}, \mathcal{Y}_s^{(2)},..., \mathcal{Y}_s^{(K)} )$ with the following iterative process:
\begin{align}
    \mathcal{Y}_s^{(k)} &\leftarrow \mathcal{Y}_s^{(k-1)} - c \frac{\partial \mathcal{J}_1}{\partial \mathcal{Y}_s}\at[\Big]{\mathcal{Y}_s =  \mathcal{Y}_s^{(k-1)}} \notag\\
    =\,& (1-2b-2bc)  \mathcal{Y}_s^{(k-1)} + 2b\mathcal{X}_s + 2bc \mathcal{A}_s *  \mathcal{Y}_s^{(k-1)},\label{eq:iter_grad}
\end{align}
where $\mathcal{Y}_s^{(k)}$ with $k=1, ..., K$ are iteratively updated clean hypergraph signals and the starting point is $\mathcal{Y}_s^{(0)}=\mathcal{X}_s$.

{\bf From Iterative Signal Denoising To T-HGIN.}
From the updating rule above, we then formulate T-HGIN by a slight variation to Eq.~\eqref{eq:iter_grad}. Setting the regularization parameter $b = \frac{1}{2(1+c)}$, we then obtain that
\begin{equation}
    \mathcal{Y}_s^{(k)} \leftarrow 2b \mathcal{X}_s + 2bc \mathcal{A}_s * \mathcal{Y}_s^{(k-1)}.
\end{equation}
Since $2b + 2bc = 1$, setting $2b = \alpha$ implies that $2bc = 1-\alpha$. Consequently, a single layer of the T-HGIN is formulated as
\begin{align}\label{eq:T-HGIN}
\hspace{-.1in} \begin{cases}
        &\text{Signal transformation: } \mathcal{X}_s' = \text{MLP}(\mathcal{X}_s); \\
        &\text{Signal shifting: } \mathcal{Y}_s^{(k)} = \alpha \mathcal{X}_s' + (1-\alpha) \mathcal{A}_s * \mathcal{Y}_s^{(k-1)},
 \end{cases}
\end{align}
with $ k = 1,..., K$, $\mathcal{Y}_s^{(0)}=\mathcal{X}_s'$ and $\alpha \in [0, 1]$. The signal transformation is constructed by a MLP. The signal shifting of the T-HGIN can be roughly viewed as an iterative personalized PageRank~\cite{pagerank}, where $\alpha$ is the probability that a node will teleport back to the original node and $1-\alpha$ is the probability of taking a random walk on the hypergraph through the hypergraph signal shifting. In fact, when $\alpha=0$ and $K=1$, the T-HGIN is the same as the T-HGCN, indicating that the T-HGCN could be subsumed in the proposed T-HGIN framework. In addition, T-HGIN has three major advantages compared to T-HGCN: 
\begin{enumerate}
    \item As shown in Fig.~\ref{fig:modelcompare}, a $K$-layer T-HGCN is required to perform $K$ steps of hypergraph signal shifting, but in contrast, the T-HGIN breaks this required equivalence between the depth of neural networks and the steps of signal shifting, allowing any steps of signal shifting  in just one layer.
    \item The T-HGIN leverages the information contained in the original hypergraph signal $\mathcal{X}_s$, which performs a ``skip-connection" analogous to ResNet~\cite{resnet} and mitigates the potential over-smoothing problem~\cite{pagerank} as the neural network is going deep to aggregate broader neighborhood.
    \item Although the $K$-step hypergraph signal shifting is somewhat involved, the number of learnable parameters remains the same as only one layer of the T-HGCN. As shown in the following experiment, the T-HGIN can often achieve better performance than other alternative HyperGNNs that would require more learnable parameters.
\end{enumerate}

\section{Experiments}
We evaluate the proposed T-HGIN approach on three real-world academic networks and compare it to four state-of-the-art benchmarks. The experiment aims to conduct a semi-supervised node classification task, in which each node is an academic paper and each class is a research category. We use the accuracy rate to be the metric of model performance. For each reported accuracy rate, $50$ experiments are performed to compute the mean and the standard deviation of the accuracy rates. We use the Adam optimizer with a learning rate and the weight decay choosing from $\{0.01, 0.001\}$ and $\{0.005, 0.0005\}$ respectively, and tune the hidden dimensions over $\{64, 128, 256, 512\}$ for all the methods.

{\bf Datasets.} The hypergraph datasets we used are the co-citation datasets (Cora, CiteSeer, and PubMed) in the academic network. The hypergraph structure is obtained by viewing each co-citation relationship as a hyperedge. The node features associated with each paper are the bag-of-words representations summarized from the abstract of each paper, and the node labels are research categories (e.g., algorithm, computing, etc). For expedited proof of concept, the raw datasets from~\cite{hypergcn} are downsampled to smaller hypergraphs. The descriptive statistics of these hypergraphs are summarized in Table~\ref{table:data_stat1}.
\begin{table}[htp]
\vspace{-0.3cm}
\caption{Summary Statistics of the Academic Network Datasets}
\vspace{-0.5cm}
\begin{center}
    \begin{tabular}{c ccc}
\hline

 Statistics     & Cora   & Citeseer   & PubMed            \\
        \hline
$|\mathcal{V}|$   & 83     & 87         & 89             \\
$|\mathcal{E}|$ & 42     & 50         & 40             \\
Feature Dimension $D$   & 1433   & 3703       & 500      \\
 Number of Classes  & 7      & 6          & 3        \\
 Maximum Shortest Path & 2  & 4  & 3   \\
 Connected Components & 6 & 6 & 10 \\
 \hline
\end{tabular}\label{table:data_stat1}
\end{center}
\vspace{-0.4cm}
\end{table}

{\bf Experiment Setup and Benchmarks.} To classify the labels of testing nodes, we feed the whole hypergraph structure and node features to the model. The training, validation, and testing data are set to be $50\%, 25\%$, and $25\%$ for each complete dataset, respectively. We choose regular multi-layer perceptron (MLP), HGNN~\cite{HGNN}, HyperGCN~\cite{hypergcn}, and HNHN~\cite{hnhn} as the benchmarks. In particular, the HGNN and the HyperGCN utilize hypergraph reduction approaches to define the hypergraph adjacency matrix and Laplacian matrix, which may result in higher-order structural distortion~\cite{wan2021principled}. The HNHN formulates a two-stage propagation rule using the incidence matrix, which does not use higher-order interactions of the hypergraph signal tensor~\cite{t-hypergnn}. Following the convention of HyperGNNs, we set the number of layers for all HyperGNNs to be $2$ to avoid over-smoothing except for the T-HGCN and the proposed T-HGIN. For the T-HGCN and the T-HGIN, we use only one layer: the T-HGCN's accuracy decreases when the number of layers is greater than one, while the T-HGIN can achieve a deeper HyperGNN architecture by varying the times of iteration $K$ within one layer as shown in Fig.~\ref{fig:modelcompare} (b). The grid search is used to tune the two hyperparameters $K$ and $\alpha$ through four evenly spaced  intervals in both $K\in [1, 5]$ and $\alpha \in [0.1, 0.5]$. 

\begin{table}[htp]
\caption{Averaged testing accuracy ($\%$, $\pm$ standard deviation) on five academic networks for semi-supervised node classification. The top result for each dataset is highlighted.}
\begin{center}
    \begin{tabular}{l ccc}
    \hline\\[-0.6ex]
  Method          & Cora           & Citeseer     & PubMed       \\[0.5ex]
           \hline\\[-0.5ex]
MLP        & $48.23 \pm 7.35$     & $65.56\pm1.48 $ & $73.89 \pm 5.60 $ \\[0.5ex]
HGNN~\cite{HGNN}       & $70.59   \pm 1.22$ & $73.89\pm8.98$   & $82.22\pm1.33$  \\[0.5ex]
HyperGCN~\cite{hypergcn}   & $35.29 \pm 1.24 $  & $61.11\pm1.53$   & $76.11\pm1.40$     \\[0.5ex]
HNHN~\cite{hnhn}       &$ 69.41 \pm 9.04 $  & $74.44\pm9.69$   & $77.22 \pm 4.08$  \\[0.5ex]
T-HGCN~\cite{t-hypergnn} &$71.59 \pm 3.43$  &$ 78.33\pm8.03 $  & $86.67 \pm 1.18$  \\[0.5ex]
{\bf T-HGIN} (ours) & $\mathbf{73.64 \pm 2.47}$& $\mathbf{79.56 \pm 3.52}$& $\mathbf{90.31\pm 2.85}$\\[0.5ex]
\hline
\end{tabular}\label{table:exp1_result}
\vspace{-0.6cm}
\end{center}
\end{table}
{\bf Results and Discussion.} The averaged accuracy rates are summarized in Table~\ref{table:exp1_result}, which shows that our proposed $K$-step shifting entailed T-HGIN achieves the best performance among the state-of-the-art HyperGNNs on the three hypergraphs. While high variances of the results often occur to other existing HyperGNNs in these data examples, the proposed T-HGIN desirably shows only relatively moderate variance. 

{\bf The effect of the number of iterations.} Interestingly, the optimal values selected for $K$ coincide with the maximum shortest path on the underlying hypergraphs, the observation of which is consistent with that of~\cite{pagerank}. To some extent, this phenomenon supports the advantage of the proposed T-HGIN over other ``shallow" HyperGNNs that perform only one or two steps of signal shifting. Equipped with the multi-step iteration and the skip-connection mechanism, the T-HGIN is able to fully propagate across the whole hypergraph, and importantly, avoid the over-smoothing issue at the same time. 

{\bf The effect of the teleport probability.} Regarding the teleport parameter $\alpha$, the optimal selected values for the three datasets are $\{0.1, 0.1, 0.3\}$, respectively. 
Empirically, the selection of $\alpha$'s could depend on the connectivity of nodes. For example, the PubMed hypergraph has more isolated connected components and tends to require a higher value of $\alpha$. A direct visualization for the PubMed network is also shown in Fig.~\ref{fig: pubmed_visual} using one representative run of the experiment, which shows that the tensor-based approaches appear to give more satisfactory performance than the classic matrix-based HyperGNN; the proposed T-HGIN further improves upon the T-HGCN, confirming the effectiveness of the proposed multi-step iteration scheme.

\begin{figure}[htbp]
\vspace{-0.3cm}
\centerline{\includegraphics[width=0.5\textwidth]{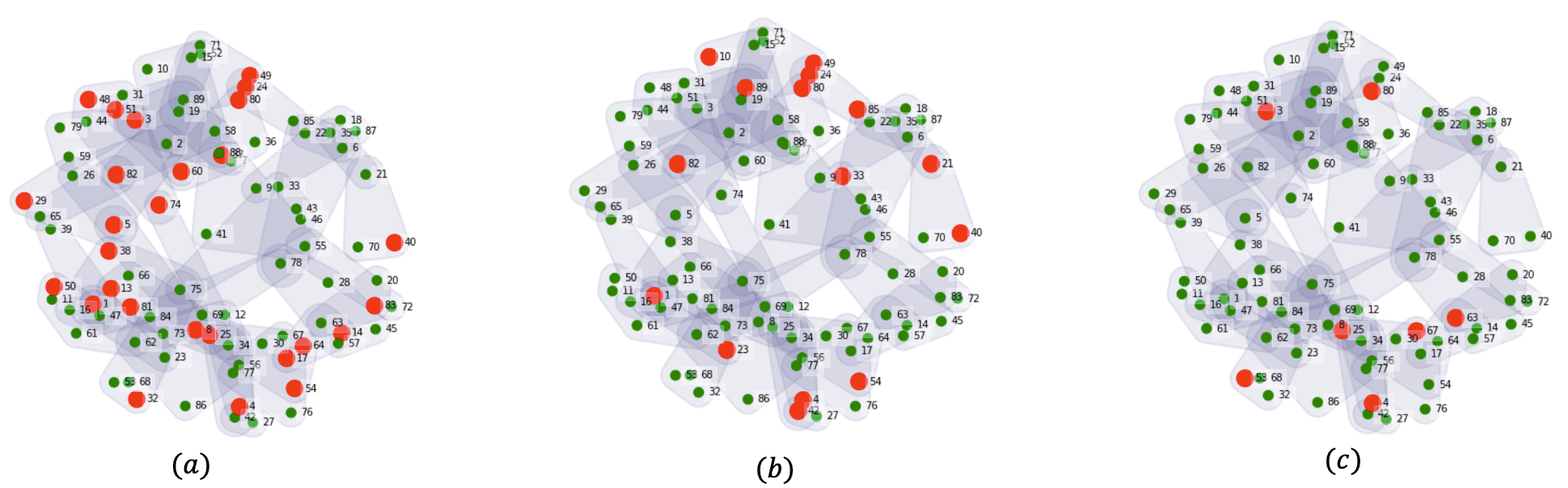}}
\caption{Comparison of label prediction on the PubMed dataset between (a) HyperGCN~\cite{hypergcn}, (b) T-HGCN~\cite{t-hypergnn}, and (c) T-HGIN. Red and green dots represent incorrectly and correctly classified nodes respectively.}
\label{fig: pubmed_visual}
\end{figure}

\section{Conclusion}
In the context of Tensor-HyperGraph Neural Networks (T-HyperGNNs), this work demonstrates that the hypergraph signal shifting of T-HGCN is equivalent to a one-step gradient descent of solving the hypergraph signal denoising problem. Based on this equivalency, we propose a $K$-step gradient descent rule and formulate a new hypergraph neural network -- Tensor-Hypergraph Iterative Network (T-HGIN). Compared to the T-HGCN, the T-HGIN benefits from the construction of $K$-step propagation in one single layer, offering an efficient way to perform propagation that spreads out to a larger-sized neighborhood. Satisfactorily, the proposed T-HGIN achieves competitive performance on multiple hypergraph data examples, showing its promising potential in real-world applications. We also note that the equivalency between HyperGNNs and HyperGSDs can also be utilized to design neural networks for denoising like in~\cite{rey2022untrained, rey2021overparametrized}, and we will leave this as an interesting extension for future studies.

\ifCLASSOPTIONcompsoc
  \section*{Acknowledgments}
\else
  \section*{Acknowledgment}
\fi
This work was partially supported by the NSF under grants CCF-2230161, DMS-1916376, the AFOSR award FA9550-22-1-0362, and by the Institute of Financial Services Analytics.

\bibliographystyle{IEEEtran}
\bibliography{mybib} 

\begin{thebibliography}{10}
\providecommand{\url}[1]{#1}
\csname url@samestyle\endcsname
\providecommand{\newblock}{\relax}
\providecommand{\bibinfo}[2]{#2}
\providecommand{\BIBentrySTDinterwordspacing}{\spaceskip=0pt\relax}
\providecommand{\BIBentryALTinterwordstretchfactor}{4}
\providecommand{\BIBentryALTinterwordspacing}{\spaceskip=\fontdimen2\font plus
\BIBentryALTinterwordstretchfactor\fontdimen3\font minus
  \fontdimen4\font\relax}
\providecommand{\BIBforeignlanguage}[2]{{%
\expandafter\ifx\csname l@#1\endcsname\relax
\typeout{** WARNING: IEEEtran.bst: No hyphenation pattern has been}%
\typeout{** loaded for the language `#1'. Using the pattern for}%
\typeout{** the default language instead.}%
\else
\language=\csname l@#1\endcsname
\fi
#2}}
\providecommand{\BIBdecl}{\relax}
\BIBdecl

\bibitem{biomedical}
X.~Yue, Z.~Wang, and et~al., ``Graph embedding on biomedical networks: methods,
  applications and evaluations,'' \emph{Bioinformatics}, 2020.

\bibitem{nextitem}
J.~Wang, K.~Ding, L.~Hong, H.~Liu, and J.~Caverlee, ``Next-item recommendation
  with sequential hypergraphs,'' \emph{SIGIR}, 2020.

\bibitem{zheng2020gman}
C.~Zheng, X.~Fan, C.~Wang, and J.~Qi, ``Gman: A graph multi-attention network
  for traffic prediction,'' \emph{the AAAI conference}, 2020.

\bibitem{hypergraph_review}
M.~T. Schaub, Y.~Zhu, J.-B. Seby, T.~M. Roddenberry, and S.~Segarra, ``Signal
  processing on higher-order networks: Livin’on the edge... and beyond,''
  \emph{Signal Processing}, 2021.

\bibitem{wan2021principled}
C.~Wan, M.~Zhang, and et~al., ``Principled hyperedge prediction with structural
  spectral features and neural networks,'' \emph{arXiv}, 2021.

\bibitem{HGNN}
Y.~Feng, H.~You, Z.~Zhang, R.~Ji, and Y.~Gao, ``Hypergraph neural networks,''
  \emph{AAAI}, 2019.

\bibitem{attention}
S.~Bai, F.~Zhang, and P.~H. Torr, ``Hypergraph convolution and hypergraph
  attention,'' \emph{Pattern Recognition}, vol. 110, p. 107637, 2021.

\bibitem{t-hypergnn}
F.~Wang, K.~Pena-Pena, W.~Qian, and G.~Arce, R., ``{T-HyperGNNs}: Hypergraph
  neural networks via tensor representations,'' \emph{TechRxiv}, 2023.

\bibitem{order-p_tensor}
C.~D. Martin, R.~Shafer, and B.~LaRue, ``An order-p tensor factorization with
  applications in imaging,'' \emph{SIAM on Scientific Computing}, 2013.

\bibitem{pagerank}
J.~Gasteiger, A.~Bojchevski, and S.~G{\"u}nnemann, ``Predict then propagate:
  Graph neural networks meet personalized pagerank,'' \emph{ICLR}, 2018.

\bibitem{pena2022t}
K.~Pena-Pena, D.~Lau, and G.~Arce, ``{T-HGSP}: Hypergraph signal processing
  using t-product tensor decompositions,'' \emph{IEEE Transactions on Signal
  and Information Processing over Networks}, 2023.

\bibitem{ma2021unified}
Y.~Ma, X.~Liu, T.~Zhao, Y.~Liu, J.~Tang, and N.~Shah, ``A unified view on graph
  neural networks as graph signal denoising,'' \emph{ACM International
  Conference on Information \& Knowledge Management}, 2021.

\bibitem{resnet}
K.~He, X.~Zhang, S.~Ren, and J.~Sun, ``Deep residual learning for image
  recognition,'' \emph{CVPR}, 2016.

\bibitem{hypergcn}
N.~Yadati and M.~Nimishakavi, ``{HyperGCN}: A new method for training graph
  convolutional networks on hypergraphs,'' \emph{NIPS}, 2019.

\bibitem{hnhn}
Y.~Dong, W.~Sawin, and Y.~Bengio, ``{HNHN}: Hypergraph networks with hyperedge
  neurons,'' \emph{arXiv}, 2020.

\bibitem{rey2022untrained}
S.~Rey, S.~Segarra, R.~Heckel, and A.~G. Marques, ``Untrained graph neural
  networks for denoising,'' \emph{IEEE Transactions on Signal Processing},
  2022.

\bibitem{rey2021overparametrized}
S.~Rey, V.~Tenorio, S.~Rozada, L.~Martino, and A.~G. Marques,
  ``Overparametrized deep encoder-decoder schemes for inputs and outputs
  defined over graphs,'' \emph{EUSIPCO}, 2021.

\end{thebibliography}


\end{document}